\title{Optimal Exploitation of Clustering and History Information in Multi-Armed Bandit}
\author{Djallel Bouneffouf, Srinivasan Parthasarathy,  Horst Samulowitz, Martin Wistuba \\
\affiliations{IBM T.J. Watson Research Center, Yorktown Heights, NY, USA } \\
\emails{\{djallel.bouneffouf@, spartha@us., samulowitz@us., martin.wistuba@\}ibm.com}
}
\newtheorem{thm}{Theorem}
\newtheorem{lem}[thm]{Lemma}
\newtheorem{fc}[thm]{Fact}
\newtheorem{claim}[thm]{Claim}
\DeclareMathOperator{\E}{\mathbb{E}}
\DeclareMathOperator{\defeq}{\overset{\text{def}}{=}}
\DeclareMathOperator*{\argmax}{arg\,max}
\begin{document}

\maketitle

\begin{abstract}
We consider the stochastic multi-armed bandit problem and the contextual bandit problem with historical observations and pre-clustered arms. The historical observations can contain any number of instances for each arm, and the pre-clustering information is a fixed clustering of arms provided as part of the input. We develop a variety of algorithms which incorporate this offline information effectively during the online exploration phase and derive their regret bounds. In particular, we develop the META algorithm which effectively hedges between two other algorithms: one which uses both historical observations and clustering, and another which uses only the historical observations. The former outperforms the latter when the clustering quality is good, and vice-versa. Extensive experiments on synthetic and real world datasets on Warafin drug dosage and web server selection for latency minimization validate our theoretical insights and demonstrate that META is a robust strategy for optimally exploiting the pre-clustering information.
\end{abstract}

\section{Introduction}
Many sequential decision problems ranging from clinical trials to online ad placement can be modeled as multi-armed bandit problems (classical bandit)~\cite{newsrecommendation,Tang2013AutomaticAF}. At each time step, the algorithm chooses one of several possible actions and observes its reward with the goal of maximizing the cumulative reward over time.  A useful extension to classical bandit is the contextual multi-arm bandit problem, where before choosing an arm, the algorithm observes a context vector in each iteration~\cite{LangfordZ07}. In the conventional formulation of these problems, arms are assumed to be unrelated to each other and no prior knowledge about the arms exist. However, applications can often provide historical observations about arms and also similarity information between them prior to the start of the online exploration phase. In this work, we assume that similarity between the arms is given in the form of a fixed pre-clustering of arms 
and we design algorithms which exploit the historical information and the clustering information opportunistically. In particular, 
we seek algorithms which satisfy the following property: if the quality of clustering is good, the algorithm should quickly learn to use this information aggressively during online exploration; however, if the quality of clustering is poor, the algorithm should ignore this information. We note at the outset
that it is possible to consider alternative formulations of the bandit problem in the presence of history and clustering information. For instance, one alternative is to use clustering metrics such as Dunn or Dunn-like indices~\cite{Liu:2010:UIC:1933307.1934597} to decide if the clustering information should be used during online exploration. However, in general, cluster validation metrics are tightly coupled with specific classes of clustering algorithms (e.g., distance vs. density based clustering); hence, we focus on bandit algorithms which are agnostic to specific clustering metrics and work with \textit{any given} clustering of the arms. Other alternatives include the design of algorithms for optimal pre-clustering of arms prior to online exploration, as well as incrementally modifying the clustering of the arms during online exploration. These are both valuable refinements to the problem we study but are beyond the scope of the current work. In this paper, 
we focus on the problem of \textit{optimal exploitation of the clustering information given as part of the input}.
A real-world motivation for our work is the problem of dosing the drug Warfarin~\cite{sharabiani2015revisiting}. Correctly dosing Warfarin is a significant challenge since it is dependent on the patient’s clinical, demographic and genetic information. In the contextual bandit setting, this information can be modeled as the context vector and the bandit arms model the various dosage levels. The dosage levels are further grouped into distinct clusters (15 arms and 3 clusters in our dataset). Historical treatment responses for various dosage levels as well as clustering information derived from medical domain knowledge is available as part this problem instance. 
Another motivating application is web server selection for latency minimization in content distribution networks (CDNs)~\cite{CDN}. A CDN can choose mirrored content from several distributed web servers. The latencies of these servers are correlated and vary widely due to geography and network traffic conditions. This domain can be modeled as the classical bandit problem with the bandit arms representing various web servers, and latency being the negative reward. The web servers are further grouped into clusters based on their historical latencies (700 arms and 5 clusters in our dataset). These problem domains share the characteristics of the availability of historical observations to seed the online phase and grouping of arms into clusters either through domain knowledge or through the use of clustering algorithms.

\section{Related Work} 
Both the classical multi-armed bandit and the contextual bandit problems have been studied extensively along with their variants~\cite{LR85,UCB,AuerC98,dj10,Maillard2014,DB2019,Nguyen2014,Korda2016,Gentile2017,surveyDB,pandey2007multi,shivaswamy2012multi,RLbd2018,AllesiardoFB14,BRCF17}. The works which are closely related to ours are ~\cite{pandey2007multi} and ~\cite{shivaswamy2012multi}. \cite{pandey2007multi,wang2018regional} study the classical bandit problem under the same model of arm clustering as in this work, and ~\cite{shivaswamy2012multi} studies the classical bandit problem under the same model of historical observations as in this work. In contrast to ~\cite{pandey2007multi,wang2018regional,shivaswamy2012multi}, our work provides 1) algorithms which \textit{simultaneously} incorporate historical observations and clustering information in the classical bandit setting, 2) regret guarantees under tight and adversarial clustering for this setting, 3) algorithms which \textit{simultaneously} incorporate historical observations and clustering information in the contextual bandit setting; we also provide regret guarantees for our classical bandit algorithm which uses history; prior to this work, we are not aware of such extensions for the contextual bandit setting, and 4) the META algorithm which effectively hedges between the strategy that uses both clustering and historical observations vs. the strategy which uses only the historical observations and not the clustering information.

\section{Problem Setting}

\noindent\textbf{Classical Bandit: }
The classical bandit problem is defined as exploring the response of $K$ arms within $T$ trials.
Playing an arm yields an immediate, independent stochastic reward according to some fixed unknown distribution associated with the arm whose support is in $(0,1)$.
The task is to find the reward-maximizing policy.

We adapt this scenario by assuming that the arms are grouped into $C$ clusters with arm $k$ assigned to cluster $c(k)$.
Unlike the classical setting, we also wish to incorporate historical observations which may be available for the arms. Specifically, for $t \in \{1, 2, \ldots, T\}$, let $r_k(t)$ denote the online reward from arm $k$ at time $t$.
For notational convencience, we set it to $0$ if $k$ was not played at time $t$.\footnote{This is purely for notational convenience, and does not affect the mean rewards computed per arm.}
Let $r^h_k(t)$ denote the historical reward for the $t^{th}$ instance of playing arm $k$ in history. $H_k$ is the number of historical instances available for arm $k$, and $H$ is defined as $H:=\sum^K_{k=1}H_k$. 
For each arm, the historical rewards are drawn independently from the same distribution as the online rewards.
Let $\theta_k$ denote the expected reward for arm $k$. The goal is to maximize the expected total reward during $T$ iterations $\E \left[\sum^T_{t=1} \theta_{k(t)} \right]$, where $k(t)$ is the arm played in step $t$, and the expectation is over the random choices of $k(t)$ made by the algorithm. An equivalent performance measure is the expected total regret, which is the amount of total reward lost by a specific algorithm compared to an oracle which plays the (unknown) optimal arm during each step. The expected total regret is defined as:
\begin{align}
\E[R(T)] \defeq \E \left[ \sum^T_{t=1}\left(\theta^*-\theta_{k(t)}\right) \right] = \sum_k \Delta_k \E[n_k(T)] \label{eqn:regret}
\end{align}
with $\theta^* \defeq \max_k \theta_k$,  $ \Delta_k \defeq \theta^* - \theta_k$, and $n_k(t)$ denotes the number of times arm $k$ has been played until $t$.


\noindent\textbf{Contextual Bandit: } 
The contextual bandit with history and pre-clustered arms is defined as follows:
At each step $t \in \{1,...,T\}$, the player is presented with a {\em context} ({\em feature vector}) $\vec{x}_t \in \mathbb{R}^d$ before playing an arm from the set $A = \{ 1, \ldots, N\}$ that are grouped into $C$ known clusters. Let $r_k(t)$ denote the reward which the player can obtain by playing arm $k$ at time $t$ given context $\vec{x}_t$. As in \cite{ChuLRS11}, we will primarily focus on bandits where
the expected reward of an arm is linearly dependent on the context. Specifically,
$\forall k, t: r_k(t) \in [0,1]$ and $\E[r_k(t) \vert \vec{x}_t] = \theta_k^\top \vec{x}_t$
where $\theta_k \in \mathbb{R}^d$ is an unknown coefficient vector associated with the arm $k$ which needs to be learned from data.
To incorporate historical information about rewards of the arms in a principled manner, we assume that 
each arm $k$ is associated with a matrix $\mathbf{H}_k \in \mathbb{R}^{d \times d}$ which records covariance information about the contexts played in history during times $t$ with $1\leq t \leq H_k$. $\mathbf{H}_k$ is computed using all observed context $\vec{x}_t$ so that:
\begin{align*}
& \mathbf{H}_{k}(t) = \mathbf{H}_{k}(t-1) + \vec{x}_{t} \vec{x}^{\top}_{t},
& \mathbf{H}_{k}(0) = \mathbf{I}_d, 
& \mathbf{H}_k = \mathbf{H}_k(H_k)
\end{align*}
Here, $\mathbf{I}_d$ is the identity matrix of dimension $d$. The arm is also associated with a vector parameter $b^h_{k}$ which is the weighted sum of historical contexts, where the weights are the respective rewards: 
\begin{align*}
& b^h_{k}(t) = b^h_{k}(t-1) + r^h_k(t) \vec{x}_{t},
& b^h_{k}(0)= \mathbf{0}
\end{align*}
The role of the covariance matrix in HLINUCBC is analogous to its role in LINUCB which in turn is analogous to its role in least squares linear regression.

\section{HUCBC for Classical Bandit}
\label{sec:HUCBC}
One solution for the classical bandit problem is the well known Upper Confidence Bound (UCB) algorithm \cite{UCB}. This algorithm plays the current best arm at every time step, where the best arm is defined as one which maximizes the sum of observed mean rewards and an uncertainty term. We adapt UCB for our new setting such that it can incorporate both clustering and historical information. Our algorithm incorporates the historical observations by utilizing it both in the computation of the observed mean rewards and the uncertainty term. Our algorithm incorporates the clustering information by playing at two levels: first picking a cluster using a UCB-like strategy at each time step, and subsequently picking an arm within the cluster, again using a UCB-like strategy.

The pseudocode for HUCBC is presented in Algorithm \ref{alg:HUCBC} and the mean reward for arm $k$ is defined in \eqref{eqn:muhatkh}.
\begin{equation}
\hat{\theta}_k^{h}(t) = \frac{\sum^{H_k}_{t'=1} r_{k}^{h}(t') + \sum_{t' = 1}^{t}r_k(t')}{H_k + n_k(t)} \label{eqn:muhatkh}
\end{equation}
\begin{equation}
\operatorname{HUCBC}_k(t) = \hat{\theta}_k^h(t) +\sqrt{\frac{2 \log(t+H_k)}{n_{k}(t)+H_k}} \label{eqn:HUCBCk}
\end{equation}
At each time step $t$ of the exploration, HUCBC computes the quantity $\operatorname{HUCBC}_k(t)$ for arm $k$ using (\ref{eqn:HUCBCk}).
$\operatorname{HUCBC}_k(t)$ represents an \textit{optimistic} estimate of the reward attainable by playing arm $k$: this estimate incorporates the
mean reward observed for the arm so far including the plays of the arm in history and an upper confidence term to account for the possibility of 
underestimation due to randomness in the rewards.
In a similar manner, HUCBC also computes $\hat{\theta}_i^{hc}(t)$ and $\operatorname{HUCBC}_i^c(t)$
for each cluster $i$:
\begin{equation}
\hat{\theta}_i^{hc}(t) = \frac{\sum^{H_i^c}_{t'=1} r_{i}^{hc}(t') + \sum_{t' = 1}^{t}r_i^c(t')}{H_i^c + n_i^c(t)} \label{eqn:muhatihc}
\end{equation}
\begin{equation}
  \operatorname{HUCBC}_i^c(t) = \hat{\theta}_i^{hc}(t) +\sqrt{\frac{2 \log(t+H_i^c)}{n_{i}^c(t)+H_i^c}} \label{eqn:HUCBCi}
\end{equation}
Note that $\hat{\theta}_i^{hc}(t)$ is the mean reward observed for cluster $i$ including plays of the cluster
in history and $\operatorname{HUCBC}_i^c(t)$ represents an \textit{optimistic} estimate of the reward attainable by playing cluster $i$.
The quantities $r_i^{hc}(t)$, $r_i^c(t)$, $n_i^c(t)$,  and $H_i^c$ in (\ref{eqn:muhatihc}) and (\ref{eqn:HUCBCi}) are the per-cluster analogues of the corresponding quantities defined per-arm. 
Also note that, while the per-cluster quantities carry a superscript $c$ as part of their notation, the per-arm quantities do not. At each time step $t$, HUCBC first picks the cluster which maximizes $\operatorname{HUCBC}_i^c(t)$ and then picks the arm within this cluster which maximizes $\operatorname{HUCBC}_k(t)$. 
\begin{algorithm}[H]
   \caption{The HUCBC algorithm}
\label{alg:HUCBC}
 \begin{algorithmic}
 	\STATE At time $t$, select cluster $i$ that maximizes $\operatorname{HUCBC}_i^c(t)$ in (\ref{eqn:HUCBCi}) and play arm $k$ in cluster $i$ that maximizes $\operatorname{HUCBC}_k(t)$ in (\ref{eqn:HUCBCk})
  \end{algorithmic}
\end{algorithm}
\noindent\textbf{Regret Analysis: }
We upper-bound the expected number of plays of a sub optimal cluster under tight clustering of arms. Let $i^*$ denote the cluster containing the best arm. Arms are said to be \textit{tightly clustered} if: 1) for each cluster $i$, there exists an interval $[l_i, u_i]$ which contains the expected reward of every arm in cluster $i$, and 2) for $i \neq i^*$, the intervals $[l_i, u_i]$ and $[l_{i^*}, u_{i^*}]$ are disjoint. For any cluster $i \neq i^*$, define $\delta_i = l_{i^*} - u_i$.
\begin{thm}[Tight Clustering]
\label{theo:tight}
The expected regret $\mathbb{E}[R(T)]$ at any time horizon $T$ under tight clustering of the arms in HUCBC is at most the following:
\begin{gather}
\sum_{i \neq i^*} \left(1 + \ell_i^c + \frac{\pi^2(1 + 6H_i^c)}{6(2H_i^c + 1)^2} + \frac{\pi^2(1 + 6H_{i^*}^c)}{6(2H_{i^*}^c + 1)^2} \right) \nonumber \\
+ \sum_{k \vert c(k) = i^*} \left(1 + \ell_k + \frac{\pi^2(1 + 6H_k)}{6(2H_k + 1)^2} + \frac{\pi^2(1 + 6H_{k^*})}{6(2H_{k^*} + 1)^2} \right)\label{eqn:HUCBCtight}
\end{gather}
Here, $\ell_i^c = \max\left(0, \frac{8\log(T + H_i^c)}{\delta_i^2} - H_i^c\right)$, 
$\ell_k = \max\left(0, \frac{8\log(T + H_k)}{\Delta_k^2} - H_k\right)$ and $k^* = \arg\max_k \theta_k$.
\end{thm}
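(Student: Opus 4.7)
The plan is to adapt the Auer--Cesa-Bianchi--Fischer analysis of UCB1 to (a) the two-layer cluster-then-arm decision structure of HUCBC and (b) the $H_i^c$ (respectively $H_k$) pre-play samples feeding each cluster (respectively arm) UCB. First I would decompose the regret: each pull at time $t$ either selects a suboptimal cluster $i \neq i^*$ or selects $i^*$ together with a suboptimal arm inside it. Because rewards lie in $[0,1]$, the instantaneous regret in the first case is at most $1$ and in the second case it is $\Delta_k \leq 1$, so
\[
\E[R(T)] \;\leq\; \sum_{i \neq i^*} \E[n_i^c(T)] \;+\; \sum_{k:\,c(k)=i^*} \Delta_k\,\E[n_k(T)],
\]
and it suffices to bound each of these summands by the corresponding bracketed term in \eqref{eqn:HUCBCtight}.

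Next, for the cluster-level count I would run the standard UCB dichotomy on the cluster indices. Whenever HUCBC plays a suboptimal cluster $i$ at time $t$ we have $\operatorname{HUCBC}_i^c(t) \geq \operatorname{HUCBC}_{i^*}^c(t)$, which forces at least one of: (i) $\hat{\theta}^{hc}_{i^*}(t) + c_{i^*}^c(t) < l_{i^*}$, (ii) $\hat{\theta}^{hc}_i(t) - c_i^c(t) > u_i$, or (iii) $n_i^c(t) + H_i^c < 8\log(t+H_i^c)/\delta_i^2$, where $c_\cdot^c(t)$ denotes the confidence radius in \eqref{eqn:HUCBCi}. Under tight clustering every arm in cluster $i$ has mean in $[l_i,u_i]$, so the rewards feeding $\hat{\theta}^{hc}_i$ are independent, bounded, and have conditional means in $[l_i,u_i]$; Hoeffding's inequality centred on this interval then gives $\Pr[\text{(i)}] \leq (t+H_{i^*}^c)^{-4}$ and $\Pr[\text{(ii)}] \leq (t+H_i^c)^{-4}$ since the exponent $2(s+H)\cdot 2\log(t+H)/(s+H)$ simplifies to $4\log(t+H)$ regardless of $s$. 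Event (iii) contributes $\ell_i^c$ deterministically, while a leading $+1$ absorbs cluster $i$'s very first play before any UCB comparison is meaningful. Summing the two Hoeffding tails over $t$ and over the possible values of the per-cluster play counts, then re-indexing against $\sum_m m^{-2}=\pi^2/6$, produces the $\pi^2(1+6H_i^c)/(6(2H_i^c+1)^2)$ and $\pi^2(1+6H_{i^*}^c)/(6(2H_{i^*}^c+1)^2)$ terms (which correctly degenerate to $\pi^2/6$ at $H=0$).

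Finally, I would repeat the identical UCB-with-history dichotomy at the arm level inside cluster $i^*$, using statistics $\operatorname{HUCBC}_k$ and $\hat{\theta}_k^h$ from \eqref{eqn:HUCBCk}--\eqref{eqn:muhatkh}, gap $\Delta_k$, and history $H_k$. Nothing in the previous argument depended on the clusters being the top layer, so the same three events and tail-sum calculation yield the second bracketed sum in \eqref{eqn:HUCBCtight}. No cross-interaction terms arise because HUCBC's two-stage rule picks cluster then arm, and under tight clustering the cluster test alone is enough to isolate $i^*$ outside the $\ell_i^c$ exploration budget, so the two stages can be combined by simple addition.

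The main obstacle is twofold. Conceptually, $\hat{\theta}^{hc}_i$ averages rewards from arms within cluster $i$ that are themselves chosen \emph{adaptively} by the arm-level UCB, so it is not an i.i.d.\ sample mean and vanilla Hoeffding does not directly apply; tight clustering rescues the argument by confining every summand's conditional mean to $[l_i,u_i]$, enabling a Hoeffding bound centred on this interval rather than a point and keeping the gap $\delta_i = l_{i^*}-u_i$ meaningful. Computationally, the chief work is verifying that the $(t+H)^{-4}$ tail sums, after union-bounding over the possible play counts $s \in \{0,\ldots,t\}$, collapse to the unusual closed form $\pi^2(1+6H)/(6(2H+1)^2)$; this is a routine but delicate algebraic check and is where I would expect to spend the bulk of the verification effort.
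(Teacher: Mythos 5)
Your overall architecture matches the paper's proof: the same decomposition of the regret into suboptimal-cluster plays plus suboptimal-arm plays inside $i^*$ (the latter handled by the known HUCB bound, Fact~\ref{fact:hucb}, which your arm-level step simply re-derives), the same three-event UCB dichotomy at the cluster level, and the same diagnosis of the central difficulty, namely that $\hat{\theta}_i^{hc}$ pools rewards from adaptively chosen arms and is therefore not an i.i.d.\ sample mean. The genuine gap is in how you resolve that difficulty. You first assert that the rewards feeding $\hat{\theta}_i^{hc}$ are ``independent'' and quote a tail of $(t+H)^{-4}$, whose exponent $4\log(t+H)$ is exactly the independent-sample Hoeffding constant $2e^{-2v^2/n}$; your own closing paragraph then retracts the independence. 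The ``Hoeffding bound centred on an interval'' you invoke is really the Azuma--Hoeffding inequality applied to the martingale $z_i^c(\ell)=\sum_{\ell'\leq\ell}\bigl(r(\ell')-\theta_{j(\ell')}\bigr)$ of rewards centred at the conditional means of whichever arms were played, after which tight clustering lets you replace $\sum_{\ell'}\theta_{j(\ell')}$ by $(n+H)\,l_{i^*}$ or $(n+H)\,u_i$. This is precisely the paper's Step~2. But Azuma only yields $2e^{-v^2/(2n)}$, so with the radius $\sqrt{2\log(t+H)/(n+H)}$ from \eqref{eqn:HUCBCi} the tail is $O\bigl((t+H)^{-1}\bigr)$ (the paper states $(t+H)^{-2}$), nowhere near your claimed $(t+H)^{-4}$.

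This is not merely a constant: it breaks your bookkeeping. You propose to union-bound over the possible play counts $s\in\{0,\dots,t\}$ before summing over $t$, which is affordable at $(t+H)^{-4}$ but makes the double sum diverge at $(t+H)^{-1}$ or $(t+H)^{-2}$. The paper avoids the union bound altogether by conditioning on the (arbitrary but fixed) sequence of times $t_1,t_2,\ldots$ at which cluster $i$ is played, so that $n_i^c(t)$ is deterministic, proving the tail bound pathwise for every such sequence, and then deconditioning; summing $2/(t+H)^2$ over $t$ is what produces the closed form $\pi^2(1+6H)/(6(2H+1)^2)$. To complete your argument you would need either that conditioning device or an enlarged confidence radius that restores a summable exponent after the union bound over $s$; as written, the concentration step and the tail-sum accounting that depends on it do not close.
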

\noindent\textbf{Intuition behind Theorem \ref{theo:tight}: }
First, the cumulative regret up to time $T$ is logarithmic in $T$.
Second, as the separation between clusters increases resulting in increased $\Delta_k$ values, the regret decreases. Third and a somewhat subtle aspect of this regret bound is that the number of terms in the summation equals the number of arms in the optimal cluster + the number of sub-optimal clusters. This
number is always upper bound by the total number of arms. In fact, the difference between them can be pronounced when the clustering is well balanced -- for example, when
there are $\sqrt{n}$ clusters with $\sqrt{n}$ arms each, the total number of terms in the summation is $2\sqrt{n}$ while the total number of arms is $n$.
When the quality of clustering is good, this is exactly the aspect of the regret bound which tilts the scales in favor of HUCBC compared to UCB or HUCB.

\begin{proof}
The proof consists of three steps whose ideas are as follows. In the first step, we decompose the HUCBC regret into two parts: regret contributed by sub-optimal clusters (i.e., clusters without the optimal arm) and regret contributed by the sub-optimal arms within the optimal cluster (i.e, the cluster containing the optimal arm). The latter quantity can be upper-bounded using known results for HUCB. The main challenge in our analysis 
is to upper-bound the former, which we accomplish in Step 2. The main idea in Step 2 is to develop a concentration inequality for
the reward obtained from any cluster. We accomplish this by proving that the cumulative reward deviation (i.e., the cumulative sum of actual rewards minus their expectation) is a martingale. In the analysis of UCB and HUCB, a concentration inequality of this type is derived through the 
use of Chernoff bounds: these are not applicable in our analysis due to the fact that rewards from a cluster are not only non-stationary but also 
not independent and highly correlated with rewards from previous plays of the cluster. We circumvent this difficulty through the use of martingale concentration
inequalities in place of Chernoff bounds. In Step 3, we tie the results of Steps 1 and 2 together to obtain our final regret bound for HUCBC.
The formal proof is as follows.
\noindent\textbf{Step 1:} Since HUCBC uses HUCB to play arms within each cluster, we start by introducing the following regret bound for HUCB~\cite{shivaswamy2012multi}.
\begin{fc}[Theorem 2,~\cite{shivaswamy2012multi}]
\label{fact:hucb}
The expected number of plays of any sub optimal arm $k$, within any cluster $i$, for any time horizon $T$, for any clustering of arms, satisfies: 
\begin{gather}
\E[n_k(T)] \leq 1 + \ell_k + \frac{\pi^2(1 + 6H_k)}{6(2H_k + 1)^2} + \frac{\pi^2(1 + 6H_{k^*})}{6(2H_{k^*} + 1)^2} \label{eqn:hucbregret}
\end{gather}
Here $k^*$ is the best arm within cluster $i$, and $\ell_k = \max\left(0, \frac{8\log(T + H_k)}{\nabla_k^2} - H_k\right)$, where $\nabla_k = \theta_{k^*} - \theta_k$.
\end{fc}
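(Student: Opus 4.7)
The plan is to adapt Auer et al.'s classical UCB analysis to the HUCB setting, exploiting the crucial assumption that for each arm the historical rewards are drawn from the same distribution as the online rewards. Because of this, $\hat{\theta}^h_k(t)$ is simply an empirical average over $n_k(t)+H_k$ iid samples supported in $(0,1)$, so Hoeffding's inequality applies directly to the pooled estimator. This is exactly what lets the historical observations shrink the effective confidence radius and produces the $H_k$-dependent corrections in \eqref{eqn:hucbregret}; these corrections collapse to the textbook UCB constant $\pi^2/6$ when $H_k = H_{k^*} = 0$.

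First I would run the standard three-way decomposition. A suboptimal arm $k$ is selected at round $t$ only if $\operatorname{HUCB}_k(t)\ge \operatorname{HUCB}_{k^*}(t)$; short algebra shows that this inequality forces at least one of the following events to hold: (a) the optimistic index of $k^*$ underestimates $\theta_{k^*}$, i.e.\ $\hat{\theta}^h_{k^*}(t)+\sqrt{2\log(t+H_{k^*})/(n_{k^*}(t)+H_{k^*})}\le \theta_{k^*}$; (b) the pooled sample mean of $k$ exceeds $\theta_k+\sqrt{2\log(t+H_k)/(n_k(t)+H_k)}$; or (c) $2\sqrt{2\log(t+H_k)/(n_k(t)+H_k)}\ge \nabla_k$. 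Squaring (c) shows it fails whenever $n_k(t)+H_k>8\log(T+H_k)/\nabla_k^2$, i.e.\ whenever $n_k(t)>\ell_k$; this gives the $\ell_k$ term in the bound, together with a $+1$ to round up to the first integer strictly violating (c).

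Next I would bound the expected number of rounds at which (a) or (b) triggers. For any candidate sample count $m$, Hoeffding's inequality applied to the average of $m$ iid $(0,1)$-bounded samples with mean $\theta_{k^*}$ yields tail probability at most $\exp\!\bigl(-2m\cdot 2\log(t+H_{k^*})/m\bigr)=(t+H_{k^*})^{-4}$. A union bound over the feasible values of $m=n_{k^*}(t)+H_{k^*}$ then bounds $\Pr[\text{(a)}]$ at round $t$, and an analogous calculation with $H_k$ in place of $H_{k^*}$ handles $\Pr[\text{(b)}]$. Summing these per-round probabilities over $t=1,\dots,T$ contributes the two $\pi^2$-type correction terms in \eqref{eqn:hucbregret}.

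The main obstacle is the careful evaluation of these double sums to obtain precisely the constant $\pi^2(1+6H)/[6(2H+1)^2]$ rather than a looser closed form. The approach is to reindex with $u=t+H$, express the resulting series $\sum_t(t-1)(t+H)^{-4}$ as a combination of $\sum_u u^{-3}$ and $\sum_u u^{-4}$ shifted by $H$, and control the residual with integral bounds together with the Basel identity $\sum_{n\ge 1}n^{-2}=\pi^2/6$; one then verifies that the resulting closed form matches the stated constant and reduces to the classical $\pi^2/6$ when $H=0$. Combining the $+1$, the $\ell_k$ term, and the two corrections then yields \eqref{eqn:hucbregret}.
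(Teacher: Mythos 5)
Your proposal addresses a statement that the paper itself never proves: it appears as a \emph{Fact}, imported with citation as Theorem~2 of \cite{shivaswamy2012multi}, and the paper's own arguments (Theorems~\ref{theo:tight} and~\ref{theo:adversarial}) invoke it as a black box. So there is no in-paper proof to compare against; the relevant comparison is with the analysis in the cited reference, and your sketch is essentially a faithful reconstruction of that argument, which is itself the classical UCB analysis of \cite{UCB} adapted to pooled samples. On its merits, your outline is sound. The enabling observation---that historical rewards are i.i.d.\ from the same distribution as online rewards, so the pooled mean in \eqref{eqn:muhatkh} is an honest empirical average over $n_k(t)+H_k$ i.i.d.\ samples---is exactly right, and you correctly avoid the trap of applying Hoeffding ``directly'' to an estimator whose sample count $n_k(t)$ is random: the union bound over feasible values of $m=n_k(t)+H_k$ is precisely what makes that step rigorous. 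The three-event decomposition, the role of $\ell_k$ as the threshold beyond which event (c) is impossible (using $\log(t+H_k)\le\log(T+H_k)$ for $t \le T$), and the per-round tail probability $(t+H_k)^{-4}$ are all correct. The only schematic part is the closed-form constant, and your plan does close: the series $\sum_t t\,(t+H)^{-4}$ arising from your single union per event is dominated by $\sum_{u>2H}u^{-2}$ (via $t(t+2H)^2\le(t+H)^4$ for $t\ge 1$), which in turn is at most $\pi^2(1+6H)/\bigl[6(2H+1)^2\bigr]$, with equality to $\pi^2/6$ at $H=0$; the stated constant is loose rather than exact, so matching it is not an obstacle. In short: no gap, but be aware you have re-derived a quoted result from prior work rather than a theorem the paper itself establishes.
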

For any time horizon, the number of plays of a sub optimal arm in HUCBC is an upper bound on its regret. This quantity equals the sum of the number of plays of a sub optimal cluster along with the number of plays of sub optimal arms within the optimal cluster. The latter quantity is upper-bounded using Fact \ref{fact:hucb} since for any sub optimal arm $k$ in the optimal cluster $i^*$, $\Delta_k = \nabla_k$. The former quantity is upper bounded in \textbf{Steps 2} and \textbf{3}.

\noindent\textbf{Step 2:} Given cluster $i$, let $t_{\ell}$ be the time slot when cluster $i$ is played for the $\ell^{th}$ time in the online phase. 
Suppose $t_1, t_2, \ldots$ be fixed and given. Define:
\begin{gather*}
z_i^c(\ell) = 
  \begin{cases}
    0  & \text{if } \ell = 0 \\
    z_i^c(\ell - 1) + r_i^{hc}(\ell) - \theta_{j^h(\ell)} & \text{if } 1 \leq \ell \leq H_i^c \\
    z_i^c(\ell - 1) + r_i^c(\ell - H_i^c) - \theta_{j(\ell - H_i^c)} & \text{if } \ell > H_i^c
  \end{cases}
 \end{gather*}
In this definition, $j^h(\ell)$ is the arm that was played during the $\ell^{th}$ instance of cluster $i$ in the historical phase, and 
$j(\ell)$ is the arm that is played during the $\ell^{th}$ instance of cluster $i$ in the online phase, $r$ (with the appropriate subscripts and superscripts)
refers to the actual reward obtained for that instance, and $\theta$ (with the appropriate subscripts and superscripts) refers to the mean
reward of the arm for that instance.
We claim that the random sequence $\{z_i^c(\ell) \vert \ell = 1, 2, \ldots\}$ is a martingale. Indeed, for $1 \leq \ell \leq H_i^c$, we have:
\begin{gather*}
\E[z_i^c(\ell) \vert z_i^c(\ell-1) = z] = z + \E[r_i^{hc}(\ell) - \theta_{j^h(\ell)} \vert z_i^c(\ell-1) = z] \\
= z + \sum_{j \vert c(j) = i} \Pr[j^h(\ell) = j \vert z_i^c(\ell-1) = z] \\
\E[r_j^h(\ell) - \theta_j \vert z_i^c(\ell-1) = z \bigwedge j^h(\ell) = j]  = z
\end{gather*}
A similar argument holds when $\ell > H_i$ which implies $\{z_i^c(\ell) \vert \ell = 1, 2, \ldots\}$ is a martingale.
It is easy to verify that $\forall \ell,~\vert z_i^c(\ell) - z_i^c(\ell - 1) \vert < 1$; hence, by Azuma-Hoeffding inequality~\cite{alonspencer:prob}, we have:
\begin{gather}
\Pr[\vert z_i^c(\ell) \vert \geq v] = \Pr[\vert z_i^c(\ell) - z_i^c(0)\vert \geq v] \leq 2e^{-\frac{v^2}{2\ell}} \label{eqn:azumaHUCBC}
\end{gather}
Consider $i = i^*$. We note that since $t_1, t_2, \ldots$ are fixed, $\forall t$, $n_i^c(t)$ is a fixed number. We have: 
\begin{gather}
\text{(\ref{eqn:azumaHUCBC})} \implies \forall t: ~\Pr\left[HUCBC^c_{i^*}(t) \leq l_{i^*} \right] \nonumber \\
= \Pr\left[(n_{i^*}^c(t) + H_{i^*}^c) HUCBC^c_{i^*}(t) \leq (n_{i^*}^c(t) + H_{i^*}^c) l_{i^*} \right] \nonumber \\
\leq \Pr\Bigg[\sum_{\ell = 1}^{H_{i^*}^c} r_{i^*}^{hc}(\ell) + \sum_{\ell = 1}^{n_{i^*}^c(t)} r_{i^*}^c(\ell) + \sqrt{2(n_{i^*}^c(t) + H_{i^*}^c)\log(t + H_{i^*}^c)} \nonumber \\
\leq \sum_{k \vert c(k) = i^*} (n_k(t) + H_k)\theta_j \Bigg] \nonumber \\
= \Pr\left[z_{i^*}^c(n_{i^*}^c(t) + H_{i^*}^c) \leq - \sqrt{2(n_{i^*}^c(t) + H_{i^*}^c)\log(t + H_{i^*}^c)} \right] \nonumber \\
\leq \Pr\left[\vert z_{i^*}^c(n_{i^*}^c(t) + H_{i^*}^c) \vert \geq \sqrt{2(n_{i^*}^c(t) + H_{i^*}^c)\log(t + H_{i^*}^c)} \right] \nonumber \\
\stackrel{\text{(\ref{eqn:azumaHUCBC})}}{\leq} \frac{2}{(t + H_{i^*}^c)^2} \label{eqn:condii*}
\end{gather}
We note that since (\ref{eqn:condii*}) holds conditionally for any fixed sequence $t_1, t_2, \ldots$, it holds unconditionally as well.
Now consider $i \neq i^*$, and any fixed $t$ such that $n_i^c(t) = \gamma$, where $\gamma \geq \ell_i^c ~\defeq~ \max\left\lbrace0, \frac{8\log(t + H_i^c)}{\delta_i^2} - H_i^c\right\rbrace$.
\begin{gather}
\text{(\ref{eqn:azumaHUCBC})} \implies \Pr\left[HUCBC^c_{i}(t) \geq u_i + \delta_i \bigg\vert n_i^c(t) = \gamma \right]  \nonumber \\
= \Pr\left[(\gamma + H_i^c) HUCBC^c_{i}(t) \geq (\gamma + H_i^c)(u_i + \delta_i) \bigg\vert n_i^c(t) = \gamma \right] \leq \nonumber \\
\Pr\Bigg[\sum_{\ell = 1}^{H_i^c}r_i^{hc}(\ell) + \sum_{\ell = 1}^{\gamma}r_i^c(\ell) + \sqrt{2(\gamma + H_i^c)\log(t + H_i^c)} \nonumber \\
\geq \sum_{k \vert c(k) = i} (H_k + n_k(t))\theta_k + (\gamma + H_i^c)\delta_i \bigg\vert n_i^c(t) = \gamma \Bigg] \nonumber \\
= \Pr\Bigg[z_i^c(\gamma + H_i^c) \geq (\gamma + H_i^c)\delta_i - \sqrt{2(\gamma + H_i^c)\log(t + H_i^c)} \nonumber \\
\bigg\vert n_i^c(t) = \gamma \Bigg] 
\leq \Pr\Bigg[\vert z_i^c(\gamma + H_i^c) \vert \geq \sqrt{2(\gamma + H_i^c)\log(t + H_i^c)} \nonumber \\
\bigg\vert n_i^c(t) = \gamma \Bigg] 
\stackrel{\text{(\ref{eqn:azumaHUCBC})}}{\leq} \frac{2}{(t + H_i^c)^2} \label{eqn:condii}
\end{gather}
We note that since (\ref{eqn:condii}) holds conditionally for any fixed sequence $t_1, t_2, \ldots$, 
we have:
\begin{gather}
\Pr\left[HUCBC^c_{i}(t) \geq u_i + \delta_i \bigg\vert n_i^c(t) \geq \ell_i^c \right] \leq \frac{2}{(t + H_i^c)^2} \label{eqn:uncondii}
\end{gather}
\noindent\textbf{Step 3:} Suppose we have an event $\mathcal{A}$ such that $\mathcal{A} \implies \mathcal{B} \vee \mathcal{C} \vee \mathcal{D}$. Then, for any event $\mathcal{E}$, we have:
\begin{gather*}
\mathcal{A} \implies (\mathcal{A} \wedge \mathcal{E}) \vee (\mathcal{A} \wedge \overline{\mathcal{E}}) 
\implies (\mathcal{A} \wedge \mathcal{E}) \vee ((\mathcal{B} \vee \mathcal{C} \vee \mathcal{D}) \wedge \overline{\mathcal{E}}) \nonumber \\
\text{Hence, } \Pr[\mathcal{A}]  \leq \Pr[\mathcal{B} \wedge \overline{\mathcal{E}}] + \Pr[\mathcal{C} \wedge \overline{\mathcal{E}}] + \Pr[\mathcal{A} \wedge \mathcal{E}] + \Pr[\mathcal{D} \wedge \overline{\mathcal{E}}] \nonumber \\
\leq \Pr[\mathcal{B}] + \Pr[\mathcal{C}] + \Pr[\mathcal{A} \vert \mathcal{E}] + \Pr[\mathcal{D} \vert \overline{\mathcal{E}}]
\end{gather*}
Consider a suboptimal cluster $i$. If $H_i^c = 0$, then HUCB (like UCB) will initialize itself by playing cluster $i$ once at the start of the online phase. Define this as
event $\mathcal{B}$. Define $\mathcal{A} = \mathbf{1}(i(t) = i)$, $\mathcal{C} = \mathbf{1}(HUCBC^c_{i^*}(t) \leq l_{i^*})$, $\mathcal{D} = \mathbf{1}(HUCBC^c_{i}(t) \geq u_{i} + \delta_i)$, and 
$\mathcal{E} = \mathbf{1}\left(n_i(t) < \ell_i^c \right)$ above. We now have:
\begin{gather}
\E[n_i^c(T)] = \sum_{t = 1}^{T}\Pr[i(t) = i] \nonumber 
\leq 1 + \sum_{t = 1}^{T} \Pr\left[HUCBC^c_{i^*}(t) \leq l_{i^*}\right] \\+ \sum_{t = 1}^{T} \Pr\left[i(t) = i \bigg\vert n_i^c(t) < \ell_i^c \right] 
\nonumber \nonumber \\ 
+ \sum_{t = 1}^{T} \Pr\left[HUCBC^c_{i}(t) \geq u_{i} + \delta_i \bigg\vert n_i^c(t) \geq \ell_i^c \right] \nonumber\\
\\\leq 1 + \ell_i^c  + \sum_{t = 1}^{T} \Pr\left[HUCBC^c_{i^*}(t) \leq l_{i^*}\right] \nonumber\\
+ \sum_{t = 1}^{T} \Pr\left[HUCBC^c_{i}(t) \geq u_{i} + \delta_i \bigg\vert n_i^c(t) \geq \ell_i^c \right] \nonumber \\
\stackrel{\text{(\ref{eqn:condii*}) and (\ref{eqn:uncondii})}}{\leq} 1 + \ell_i^c + \sum_{t = 1}^{T} \frac{2}{(t + H_i^c)^2} + \sum_{t = 1}^{T} \frac{2}{(t + H_{i^*}^c)^2} \nonumber \\\leq 1 + \ell_i^c + \frac{\pi^2(1 + 6H_i^c)}{6(2H_i^c + 1)^2} + \frac{\pi^2(1 + 6H^c_{i^*})}{6(2H^c_{i^*} + 1)^2} \label{eqn:HUCBCboundtight}
\end{gather}
The theorem now follows by combining (\ref{eqn:HUCBCboundtight}) and (\ref{eqn:hucbregret}).
\end{proof}

We now upper-bound the expected number of plays of a sub optimal cluster where an adversary is free to cluster the arms in order to elicit the worst case behavior from HUCBC. For ease of analysis, we will analyze a variant of HUCBC which we denote as $HUCBC'$, which plays UCB at the inter-cluster level and plays HUCB at the intra-cluster level.
\begin{thm}[Adversarial Clustering]
\label{theo:adversarial}
The expected regret at any time horizon $T$ under any clustering of the arms in $HUCBC'$ satisfies the following:
\begin{gather}
\mathbb{E}[R(T)] \leq \sum_{i \neq i^*} \max_{k \vert c(k) = i}\left(\frac{16r\log{T}}{(\Delta_k/2)^2}+ 2s +\frac{\pi}{3} \right) \nonumber \\
+ \sum_{k \vert c(k) = i^*} \left(1 + \ell_k + \frac{\pi^2(1 + 6H_k)}{6(2H_k + 1)^2} + \frac{\pi^2(1 + 6H_{k^*})}{6(2H_{k^*} + 1)^2} \right) \label{eqn:HUCBCprimeadversarial}
\end{gather}
Here, $r, s$ are constants and $\ell_k$ and $k^*$ are defined as in Theorem \ref{theo:tight}. 
\end{thm}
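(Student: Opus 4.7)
The plan is to reuse the three-step structure of the proof of Theorem~\ref{theo:tight}, modified to handle the lack of a clean cluster-level gap. First, split the cumulative regret into (i) the contribution from plays of suboptimal clusters $i \neq i^*$ and (ii) the contribution from plays of suboptimal arms inside the optimal cluster $i^*$. Since HUCBC' still runs HUCB at the intra-cluster level, part~(ii) follows at once from Fact~\ref{fact:hucb} applied to each arm $k$ with $c(k) = i^*$; summing over those arms produces the second summation in~(\ref{eqn:HUCBCprimeadversarial}).

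The novel work is in part~(i). Under adversarial clustering, no uniform inter-cluster separation $\delta_i$ is available: an adversary may place an arm whose mean is arbitrarily close to $\theta^*$ inside a suboptimal cluster, so the reward stream seen by the outer UCB on cluster $i$ is non-stationary (it depends on the evolving arm played by the inner HUCB) and correlated across time. The plan is to bound $\E[n_i^c(T)]$ by adapting Steps~2 and~3 of the proof of Theorem~\ref{theo:tight}. The martingale construction of Step~2 carries over essentially verbatim: the cumulative cluster-reward deviation $z_i^c(\ell)$ is still a martingale with bounded differences, so an Azuma-Hoeffding bound in the style of~(\ref{eqn:azumaHUCBC}) is available without modification.

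The counting argument is then mirrored on Step~3, but with two important changes. First, HUCBC' uses \emph{plain} UCB between clusters, so the outer index carries no historical correction; the effective inter-cluster gap is the one created by the best arm of cluster $i$, which emerges only after the inner HUCB has concentrated on it. A drift-aware UCB argument that accounts for the transient during which the inner HUCB has not yet identified the best arm in cluster $i$ inflates the effective constants and replaces $\Delta_k^2$ by $(\Delta_k/2)^2$ in the denominator, producing the per-cluster term $\tfrac{16 r \log T}{(\Delta_k/2)^2} + 2s + \tfrac{\pi}{3}$; the $\pi/3$ residual is precisely the $\sum_t 2/(t+H)^2$ tail already encountered in~(\ref{eqn:HUCBCboundtight}). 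Second, since the per-play loss within a suboptimal cluster can be as large as the worst $\Delta_k$ inside the cluster, bounding the cluster's regret by that worst case and absorbing it into the index yields the outer $\max_{k \mid c(k) = i}$; summing over $i \neq i^*$ produces the first summation of~(\ref{eqn:HUCBCprimeadversarial}).

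The principal obstacle is the drift-aware cluster-level UCB analysis. Azuma-Hoeffding gives a clean concentration statement on the cluster reward stream, but turning it into the precise ``effective gap $\Delta_k/2$, plus constants $r, s$'' bound requires carefully quantifying the rate at which the inner HUCB in cluster $i$ locks onto its own best arm and combining that burn-in with the outer UCB counting. Once this is handled, the final combination of parts~(i) and~(ii) is a routine adaptation of Step~3 of Theorem~\ref{theo:tight}.
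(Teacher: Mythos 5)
Your decomposition matches the paper's: the second summation follows from Fact~\ref{fact:hucb} applied to the arms inside the optimal cluster, and the real work is bounding $\E[n_i^c(T)]$ for $i \neq i^*$, where the cluster-level reward stream is non-stationary and correlated. You also correctly identify the two ingredients the paper uses there: the Azuma--Hoeffding concentration (\ref{eqn:azumaHUCBC}) carried over from Theorem~\ref{theo:tight}, and the fact that the inner HUCB makes each cluster's mean reward drift toward the reward of its best arm. The gap is that you stop at exactly this point and declare the ``drift-aware cluster-level UCB analysis'' to be the principal obstacle. That analysis is the entire quantitative content of the bound: the constants $r$ and $s$, the factor $16$, the halved gap $\Delta_k/2$, and the additive $\pi/3$ are not derived in the paper at all --- they are imported verbatim from the regret bound of \cite{kocsis2006bandit} for UCB run on non-stationary ``drifting'' arms. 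The paper's proof consists of (a) using Fact~\ref{fact:hucb} to establish $\lim_{t\to\infty}\E[\hat{\theta}_i^c(t)] = \max_{k \vert c(k)=i}\theta_k$, (b) setting $v = 4\log(t)/\ell$ in (\ref{eqn:azumaHUCBC}) to obtain the $1 - t^{-4}$ concentration required by the drift conditions of \cite{kocsis2006bandit}, and (c) invoking that paper's bound as a black box with each cluster in the role of a drifting arm. Without identifying that external result (or reproving it), your proposal does not actually produce the stated inequality; it only restates what such a result would have to say.

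Two smaller points confirm that you are reverse-engineering the bound rather than deriving it. The $\pi/3$ is not the tail $\sum_t 2/(t+H_i^c)^2$ from (\ref{eqn:HUCBCboundtight}) --- that sum is bounded by $\pi^2/3$, not $\pi/3$, and in any case the term here comes from the cited drifting-UCB bound, in which there is no history correction at the cluster level since $HUCBC'$ deliberately runs plain UCB between clusters. And the outer $\max_{k \vert c(k)=i}$ is not a worst-case per-play loss factor as you suggest: the expression inside is decreasing in $\Delta_k$, so the max simply selects the best arm of cluster $i$, i.e., it rewrites the drifting-arm gap $\theta^* - \theta_i^c = \min_{k \vert c(k)=i}\Delta_k$ in terms of per-arm gaps, with the per-play regret then bounded by $1$.
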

\begin{proof}
(\ref{eqn:hucbregret}) implies that if we have a cluster which contains $\alpha$ arms, then the expected number of plays of sub optimal arms within that cluster as incurred by the HUCB strategy is $O(\alpha \log T)$ for any time horizon $T$. This fact in turn implies that the expected per-step regret converges in to $0$, and hence the expected reward of this cluster converges to the reward of the best arm within this cluster:

\begin{align}
\lim_{t\to\infty} \E[\hat{\theta}_i^c(t)] = \max_{k \vert c(k) = i} \theta_k \label{eqn:drift1}
\end{align}

Further, set $v = \frac{4\log(t)}{\ell}$ in Eqn (\ref{eqn:azumaHUCBC}). Then, for any time horizon $t > 1$ it follows that total reward 
of a cluster after $\ell$ pulls is within the interval of width $2v$ centered around its expectation with probability at least $1 - \frac{1}{t^4}$.
This high concentration condition along with Eqn (\ref{eqn:drift1}) satisfy the drift conditions for non-stationary arms in 
\cite{kocsis2006bandit}. 

Suppose the UCB algorithm is applied to a non-stationary collection of `drifting' arms where the 
drift conditions are as above. Then, from \cite{kocsis2006bandit}, we know that the expected number of plays of any sub optimal `drifting' arm $i$, for any time horizon $T$ satisfies:
\begin{gather}
E[n_i^c(T)] \leq \frac{16r\log{T}}{((\theta_{i^*}^c - \theta_i^c)/2)^2}+ 2s +\frac{\pi}{3} \label{eqn:driftingucb}
\end{gather}
where $r$ and $s$ are constants, and $\theta_i^c$ is the steady state reward for the `drifting' arm $i$.
In $HUCBC^{\prime}$, the role of the drifting arms is played by the clusters.
The regret of $HUCBC^{\prime}$ can be decomposed into the regret from the sub-optimal arms in the optimal cluster + the regret due to the sub-optimal clusters.
The former is upper-bounded by (\ref{fact:hucb}) and the latter is upper-bounded by (\ref{eqn:driftingucb}). Combining them yields the theorem.
\end{proof}

\noindent\textbf{Comparing Theorems \ref{theo:tight} and \ref{theo:adversarial}: } Both these theorems share important similarities. First, 
the regret is $O(\log{T})$. Second, as the distance between clusters increase, the regret decreases. Third, the number of terms in the summation equals
the number of arms in the optimal cluster + the number of sub-optimal clusters. However, there are significant differences. First, the distance
between two clusters in Theorem \ref{theo:adversarial} is measured differently: it is now the difference
between the mean rewards of the best arms in the clusters. Second and more importantly, the constants involved in the bound of Theorem \ref{theo:adversarial} arise from the results of \cite{kocsis2006bandit} and are significantly bigger than those involved in the bound of Theorem \ref{theo:tight}. We emphasize that Theorem \ref{theo:adversarial} establishes only an upper bound on the regret: the actual regret for typical instances that arise in practice can be a lot smaller than this upper bound.
The proof of this theorem leverages the martingale concentration inequality derived as part of Theorem \ref{theo:tight}. This is a key part of the drift condition which needs to be satisfied in order for us to use the result of \cite{kocsis2006bandit} for bounding the performance of UCB in non-stationary environments.
\section{HLINUCBC for Contextual Bandit}
\label{sec:HLINUCBC}
A well known solution for the contextual bandit with linear payoffs is LINUCB~\cite{13} where the key idea is to apply online ridge regression to the training data to estimate the coefficients $\theta_k$.
We propose HLINUCBC (Algorithm \ref{alg:HLINUCBC}) which extends this idea with both historical and clustering information.

\noindent\textbf{ Clustering Information: } HLINUCBC deals with arms that are clustered; it applies online ridge regression at the per-cluster level to obtain an estimate of the coefficients $\hat{\theta}_i^c$ for each cluster (Line 7), plays the best cluster and then applies online ridge regression to the arms within the chosen cluster to obtain an estimate of the coefficients $\hat{\theta}_k$ for each arm (Line 6). To find the best cluster, at each trial $t$, HLINUCBC computes the quantities:
\begin{equation}
\begin{aligned}
p^{c}_{t,i} \leftarrow \hat{\theta}^{c\top}_i x_{t} + \alpha \sqrt{x^{\top}_{t} (\textbf{A}^c_{i})^{-1} x_{t}}\label{eqn:HLINUCBCcl}
\end{aligned}
\end{equation}
\begin{equation}
p_{t,k} \leftarrow \hat{\theta}^{\top}_k x_{t} + \alpha \sqrt{x^{\top}_{t}(\textbf{A}_{k})^{-1} x_{t}}\label{eqn:HLINUCBCarm}
\end{equation}
For each cluster $i$ and selects the cluster with the highest value of $p^{c}_{t,i}$ (Line 4). This quantity encapsulates the estimated reward from this cluster, along with an uncertainty term. Then, HLINUCBC finds the best arm within this cluster by computing for each arm $k$ in this cluster, the quantity (\ref{eqn:HLINUCBCarm})
and selects the arm with the highest value $p_{t,k}$ (Line 5). 

\noindent\textbf{ Historical Information: }
HLINUCBC applies online ridge regression to the historical data at both per-cluster and per-arm levels. The aim is to collect enough information about how the context vectors and rewards relate to each other for each cluster and arm using the historical data, so that it can jump-start the algorithm by achieving a low number of errors at the early stages of the online exploration. At the initialization step, HLINUCBC is seeded with $\mathbf{H}^c_i$ and $\mathbf{H}_k$ which are respectively the history matrices for the clusters and arms. This is in contrast to LINUCB, where the initialization is done using an identity matrix. It is also seeded with the vectors $b^{hc}_i$ and $b^h_k$, which record the weighted sum of historical contexts, with the weight being the rewards. In contrast, in LINUCB, this quantity is initialized to $\mathbf{0}$.
\begin{algorithm}[H]
\caption{HLINUCBC}
\label{alg:HLINUCBC}
\begin{algorithmic}[1]
\REQUIRE{$\alpha \in \mathbb{R}_{>0}$, history matrices $\mathbf{H}^c_i$ for each $i \in \{1, \ldots, C\}$, $\mathbf{H}_k$ for each $k \in \{1, \ldots, K\}$}, weighted sum of contexts $b^{hc}_i$ and $b^h_k$.

 \STATE \textbf{for} arm $k \in \{1, \ldots, K\}$ \textbf{do} $\mathbf{A}_k \gets \mathbf{H}_k$, $b_k \gets b^h_k$
 \STATE \textbf{for} cluster $i \in \{1, \ldots, C\}$ \textbf{do} 
  	$\mathbf{A}^c_i \gets \mathbf{H}^c_i$, $b^c_i \gets b^{hc}_i$  
  \FOR{$t \in \{1, \ldots, T\}$}
    \STATE Choose cluster $i(t) = \argmax_{i} p^{c}_{t,i}$
    \STATE Play arm $k(t) = \argmax_{k \vert c(k) = i(t)} p_{t, k}$
	\STATE $\mathbf{A}_{k(t)} \gets \mathbf{A}_{k(t)} + x_{t} x^{\top}_{t}$,\; $b_{k(t)} \gets b_{k(t)} + r_{k(t)}(t) x_{t}$,\; $\hat{\theta}_{k(t)} \gets \mathbf{A}_{k(t)}^{-1} b_{k(t)}$
	\STATE $\mathbf{A}^c_{i(t)} \gets \mathbf{A}^c_{i(t)} + x_{t} x^{\top}_{t}$,\; $b^c_{i(t)} \gets b^c_{i(t)} + r_{k(t)}(t) x_{t}$,\; $\hat{\theta}^c_{i(t)} \gets \mathbf{A}_{i(t)}^{c-1} b_{i(t)}$
\ENDFOR
\end{algorithmic}
\end{algorithm}

\begin{thm} \label{thm:hlinucb}
With probability $1-\delta$, where $0 < \delta < 1$, the upper bound on the R(T) for the HLINUCB in the contextual bandit problem, $K$ arms and $d$ features (context size) is as follows:
\begin{gather*}
R(T)\leq \sigma (\sqrt{d \log (\frac{det(\mathbf{A}_{T})^{1/2} }{\delta\;det(\mathbf{H})^{1/2}} )}+ \frac{||\theta^*||}{\sqrt{\phi}})\sqrt{8\; T \log(\frac{det(\mathbf{A}_{T})}{det(\mathbf{H})})} 
\end{gather*}
with $||x_t||_2 \leq L$  and $ \phi \in R$
\end{thm}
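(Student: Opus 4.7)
The plan is to mirror the self-normalized analysis of LINUCB due to Abbasi-Yadkori et al.\ and Chu et al., with the one essential modification that the running Gram matrix $\mathbf{A}_t$ is initialized to the historical covariance $\mathbf{H}$ rather than to $\lambda \mathbf{I}$. I would analyze the algorithm at the per-arm level: the optimistic cluster-then-arm selection in HLINUCBC refines, but is consistent with, picking the arm that maximizes $p_{t,k}$ overall, so the standard optimism argument still applies once we have a valid confidence bound for the per-arm estimator $\hat{\theta}_k = \mathbf{A}_k^{-1} b_k$.

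First, I would establish a self-normalized concentration inequality for the ridge estimator with $\mathbf{H}$-regularization. Assuming that the reward noise is conditionally $\sigma$-sub-Gaussian and that $\|x_t\|_2 \leq L$, the standard self-normalized martingale tail bound yields, with probability at least $1-\delta$ and for all $t$,
\[
\|\hat{\theta}_t - \theta^*\|_{\mathbf{A}_t} \leq \sigma \sqrt{d \log\!\left(\frac{\det(\mathbf{A}_t)^{1/2}}{\delta\,\det(\mathbf{H})^{1/2}}\right)} + \frac{\|\theta^*\|}{\sqrt{\phi}},
\]
where $\phi$ is the smallest eigenvalue of $\mathbf{H}$ and the last term captures the bias from regularizing toward zero (it replaces the $\sqrt{\lambda}\|\theta^*\|$ term that appears when $\mathbf{A}_0 = \lambda \mathbf{I}$). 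Setting $\alpha$ equal to the right-hand side above, the optimism of $p_{t,k}$ together with the arm-selection rule gives the per-step bound
\[
r_t = \theta^{*\top} x_t^{*} - \theta^{*\top} x_t \leq 2\alpha\, \|x_t\|_{\mathbf{A}_{k(t)}^{-1}}.
\]

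Next, I would invoke the elliptic potential lemma, in the form appropriate for an initial matrix $\mathbf{H}$, to obtain $\sum_{t=1}^T \|x_t\|_{\mathbf{A}_t^{-1}}^2 \leq 2\,\log\!\bigl(\det(\mathbf{A}_T)/\det(\mathbf{H})\bigr)$. A Cauchy--Schwarz step then yields $R(T) \leq \sqrt{T \sum_t r_t^2} \leq 2\alpha \sqrt{2\, T \log(\det(\mathbf{A}_T)/\det(\mathbf{H}))}$, and substituting the expression for $\alpha$ from the confidence bound produces the stated inequality (with the factor $\sqrt{8}$ absorbing the $2 \cdot \sqrt{2}$).

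The main obstacle is the first step: reworking the self-normalized tail bound so that the prior information carried by $\mathbf{H}$ enters cleanly, both inside the log-determinant ratio and as the $\|\theta^*\|/\sqrt{\phi}$ bias. This requires writing $\mathbf{A}_t = \mathbf{H} + \sum_{s\le t} x_s x_s^\top$, decomposing $\hat{\theta}_t - \theta^*$ into a noise term (handled by the martingale stopping-time argument) and a bias term $-\mathbf{A}_t^{-1}\mathbf{H}\theta^*$ whose $\mathbf{A}_t$-norm is controlled by $\|\theta^*\|/\sqrt{\phi}$ via $\lambda_{\min}(\mathbf{H}) = \phi$. A secondary concern is that HLINUCBC also maintains per-cluster regressors; but since the cluster-level choice is optimistic with respect to the same linear model, it only tightens the arm selection and does not invalidate the per-arm confidence region, so the per-arm analysis above suffices for the regret bound as stated.
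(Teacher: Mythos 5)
Your proposal follows essentially the same route as the paper: adopt the self-normalized confidence ellipsoid of Abbasi-Yadkori et al.\ with $\mathbf{A}_t = \mathbf{H} + \sum_{s\le t} x_s x_s^\top$ in place of $\lambda\mathbf{I} + \sum_{s\le t} x_s x_s^\top$, bound the instantaneous regret by $2c_t\|x_t\|_{\mathbf{A}_t^{-1}}$, and combine Cauchy--Schwarz with the elliptic potential lemma in the form $\sum_t \min\{\|x_t\|^2_{\mathbf{A}_t^{-1}},1\} \le 2\log(\det(\mathbf{A}_T)/\det(\mathbf{H}))$. Your added remarks --- identifying $\phi$ with $\lambda_{\min}(\mathbf{H})$ and justifying why the per-arm confidence region suffices despite the cluster-level selection --- are consistent clarifications of points the paper leaves implicit, not a different argument.
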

\begin{proof}
We need the following lemma from \cite{abbasi2011improved}, 

\begin{lem} \label{lem:ct} 
Assuming that, the measurement noise $n_t$ is independent of everything and is $\sigma$-sub-Gaussian for some $\sigma >0$, i.e., $E[e^{\phi\, n_t} ] \leq exp(\frac{\phi^2 \sigma^2}{2})$ for all $ \phi \in R$ and $r_t=\langle x_t,\theta^* \rangle+n_t$ .
With probability $1-\delta$, where $0 < \delta < 1$ and  $\theta^*$ lies in the confidence ellipsoid.
\begin{gather*}
C_{t}=\{ \theta: ||\theta-\hat{\theta}_{t}||_{A_{t}} \leq c_{t} \} \nonumber \\
\text{where } c_t \defeq \sigma \sqrt{d \;log (\frac{det(\mathbf{A}_{t})^{1/2} \;det(\mathbf{H})^{-1/2}}{\delta} )}+ \frac{||\theta^*||}{\sqrt{\phi}}\}
\end{gather*}
\end{lem}
 The inner product is denoted by $\langle \cdot , \cdot \rangle$. The lemma is directly adopted from theorem 2 in \cite{abbasi2011improved}, we follow the same step of proof, the main difference is that they have $\mathbf{A}_t=\lambda I+ \sum_{s=1}^t x_sx_s^\top $ and we have,
 \begin{gather*}\mathbf{A}_t=\mathbf{H}+ \sum_{s=1}^t x_sx_s^\top \end{gather*}
Following the same step as the proof of theorem 2 in \cite{abbasi2011improved} We also have the following,

  




$R(t) \leq 2 c_t ||x_t||_{\mathbf{A}_{t}^{-1}}$,
Since $x^{\top}\theta_{t}^* \in [-1,1]$ for all $x \in X_t $ then we have $R(t) \leq 2$. Therefore,
\begin{gather*}
R(t) \leq min\{2c_{t}||x||_{\mathbf{A}^{-1}_{t}},2\} \leq 2c_{t} min\{||x||_{\mathbf{A}^{-1}_{t}},1\}
\end{gather*}
\begin{gather*}
[R(t)]^2 \leq 4 c_{t}^2 min\{||x||^2_{\mathbf{A}^{-1}_{t}},1\}
\end{gather*}
we have, \begin{gather*}R(T) \leq \sqrt{T\sum_{t=1}^{T}[R(t)]^2} \leq \sqrt{  \sum_{t=1}^T 4 c_{t}^2  min\{||x||^{2}_{\mathbf{A}^{-1}_{t}},1\}} \\
R(T)\leq 2 c_{T} \sqrt{ T} \sqrt{ \sum_{t=1}^T   min\{||x||^{2}_{\mathbf{A}^{-1}_{t}},1\}}
\end{gather*} with $c_{t}$ monotonically increasing.
Since $x \leq 2\,log(1+x)$ for $x \in [0,1]$,
\begin{gather*}
\sum_{t=1}^{T} min\{||x_t||^2_{\mathbf{A}_{t}^{-1}}, 1\} \leq 2 \sum_{t=1}^{T} \log(1+||x_t||^2_{\mathbf{A}^{-1}_t})\\\leq 2 (\log(det(\mathbf{A}_{t})-\log(det(\mathbf{H})) 
\end{gather*}
here we also use the fact that have $\mathbf{A}_t=\mathbf{H}+ \sum_{s=1}^t x_sx_s^\top $  to get the last inequality.  
\begin{gather*}
R(T)\leq 2 c_{t} \sqrt{2(\log(det(\mathbf{A}_{t})-\log(det(\mathbf{H}))}
\end{gather*} 
by upper-bounding $c_{T}$ using lemma \ref{lem:ct} we get our result.
\end{proof}
Theorem \ref{thm:hlinucb} shows that the HLINUCB upper bound has $\log(\frac{det(\mathbf{A}_{t})}{det(\mathbf{H})})$ under the square root whereas LINUCB has $\log(det(\mathbf{A}_{t})$. 
Recall from~\cite{abbasi2011improved} that LINUCB has a regret guarantee which is almost the same as the one
in Theorem \ref{thm:hlinucb} except for the $det(\mathbf{H})$ term. We now demonstrate 
that $\frac{det(\mathbf{A}_{t})}{det(\mathbf{H})} \leq det (\mathbf{A}_{t})$ and thereby show that 
HLINUCB has a provably better guarantee than LINUCB.
The matrix $H$ can be written as $I + D_h$ where $I$ is the identity matrix and $D_h$ is the design matrix constructed using the contexts in history. Both $I$ and $D_h$ are real symmetric and hence Hermitian matrices. Further, $D_h$ is positive semi-definite since $D_h = \sum_i x_i x_i^T$, where the $x_i$ are the historical contexts; to see this, note that $\forall y,~y^T D_h y = \sum_i y^T x_i x_i^T y = \sum_i (x_i^T y)^2 \geq 0$. Since all the eigenvalues of $I$ equal $1$ and since all the eigenvalues of $D_h$ are non-negative, by Weyl's inequality in matrix theory for perturbation of Hermitian matrices \cite{thompson1971eigenvalues}, the eigenvalues of $H$ are lower-bounded by $1$. Hence $\mathbf{det(H)}$ which is the product of the eigenvalues of $H$ is lower-bounded by $1$ which proves our claim.
\section{META Algorithm}
\label{sec:meta}
Intuitively, HUCBC can be expected to outperform HUCB \textit{when} the quality of clustering is good. However, when the clustering quality is poor, this is far less likely to be the case. 
In order to choose between these two strategies correctly, we employ the META algorithm. Specifically, for the classical bandit problem, META uses UCB to decide between HUCBC and HUCB \textit{at each iteration}. For the contextual bandit problem, META uses UCB to decide between HLINUCBC and HLINUCB \textit{at each iteration}. The basic idea is that if the clustering quality is poor, the fraction of the time when META plays the strategy with both clustering and history will become vanishingly small, leading to the desired behavior.
\begin{thm}
\label{theo:meta}
The META algorithm is asymptotically optimal for the classical bandit problem under the assumption that the drift conditions of~\cite{kocsis2006bandit} hold for HUCBC.
\end{thm}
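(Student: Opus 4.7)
The plan is to view META as a UCB policy over two \emph{meta-arms}: meta-arm $1$ is HUCBC, and meta-arm $2$ is HUCB. A ``pull'' of a meta-arm at iteration $t$ corresponds to executing one step of that base algorithm and observing its reward. With this reduction, the task becomes showing that META incurs regret no worse than the better of HUCB and HUCBC up to lower-order terms, which yields asymptotic optimality in the classical bandit sense of $O(\log T)$ regret.

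First, I would verify that each meta-arm satisfies the two drift conditions of \cite{kocsis2006bandit}: (a) convergence of the per-step expected reward to a steady-state value, and (b) exponential concentration of the cumulative reward around its running mean. For requirement (a), Fact \ref{fact:hucb} guarantees that HUCB's cumulative regret is $O(\log T)$, so its time-averaged expected per-step reward converges to $\theta^{*}$; the same convergence holds for HUCBC by the hypothesis of the theorem. Both meta-arms therefore share the steady-state reward $\theta^{*}$. For requirement (b), I would reuse the martingale construction from Step 2 of the proof of Theorem \ref{theo:tight}: the cumulative reward-deviation sequence of each base algorithm is a martingale with bounded increments, so Azuma--Hoeffding (\ref{eqn:azumaHUCBC}) delivers a tail bound of the form $t^{-4}$ (by setting $v = 4 \log(t)/\ell$), which is exactly what the drift analysis consumes. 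This is structurally the same transition that turned Theorem \ref{theo:tight} into Theorem \ref{theo:adversarial}, now lifted one level up.

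Once both meta-arms are known to be drifting arms with common steady-state $\theta^{*}$, I would apply the non-stationary UCB bound (\ref{eqn:driftingucb}) at the meta-level to conclude that the expected number of iterations in which META plays the meta-arm with the slower convergence (equivalently, the larger transient regret on the given instance) is $O(\log T)$. Decomposing META's total regret into (i) the regret of the faster of HUCB and HUCBC during the iterations on which it is selected, which is $O(\log T)$ by Fact \ref{fact:hucb} or by the assumed HUCBC guarantee, plus (ii) at most $\Delta_{\max} \leq 1$ per iteration on the $O(\log T)$ iterations in which the slower base algorithm is chosen, yields a total $O(\log T)$ bound that matches the Lai--Robbins asymptotic lower bound up to constants.

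The main obstacle will be discharging the drift conditions cleanly in the meta-setting: the reward stream observed by META is neither stationary nor independent, since each base algorithm conditions its next action on the entire trajectory of rewards so far, and the sequence of meta-iterations allotted to a given base algorithm is itself random and determined by META. The martingale-plus-Azuma framework of Step 2 of Theorem \ref{theo:tight} was engineered precisely to absorb such dependencies, so I expect it to transfer with only notational changes, but care must be taken to condition on the filtration generated jointly by META's selections and the base algorithms' internal randomness so that the martingale property, the bounded-difference property, and the identification of the limit $\theta^{*}$ all remain valid simultaneously.
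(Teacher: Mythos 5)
Your proposal follows essentially the same route as the paper's proof: META is treated as a UCB policy over the two drifting meta-arms HUCB and HUCBC, the drift conditions are discharged via Fact~\ref{fact:hucb} and the martingale/Azuma machinery of Theorem~\ref{theo:tight} for HUCB and by hypothesis for HUCBC, and the non-stationary UCB guarantee of \cite{kocsis2006bandit} is invoked at the meta-level, exactly as the paper does (your write-up is in fact more detailed than the paper's). The one small inaccuracy is your claim that both meta-arms necessarily share the steady-state reward $\theta^*$ --- the hypothesis only guarantees that HUCBC converges to \emph{some} steady state, which may be suboptimal under poor clustering --- but in that case the drifting-UCB bound (\ref{eqn:driftingucb}) caps HUCBC's plays at $O(\log T)$, so your conclusion stands either way.
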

 \begin{proof}
 The META algorithm essentially has two arms to play in each iteration, namely HUCB and HUCBC. We know from Fact \ref{fact:hucb} that HUCB has a sublinear regret and is asymptotically optimal. Further, in the proof of Theorem \ref{theo:adversarial}, we also establish that HUCB satisfies the drift conditions of ~\cite{kocsis2006bandit}. Hence, under the assumption that HUCBC also satisfies the drift conditions, \cite{kocsis2006bandit} guarantees that META will also have sub-linear regret and is asymptotically optimal.
 \end{proof}
 
\section{Experimental Evaluation}
\label{sec:experiments}
\noindent\textbf{Experiments with Synthetic Classical Bandit:} We compare our proposed strategies HUCBC (Section \ref{sec:HUCBC}) and META (Section \ref{sec:meta}) to the state-of-the-art competitors UCB~\cite{UCB}, HUCB~\cite{shivaswamy2012multi}, and UCBC~\cite{pandey2007multi}.
HUCB is the degenerate version of HUCBC where all arms belong to the same cluster, i.e., no use of clustering information is made only historical data is employed. In contrast, UCBC is the degenerate version which uses the clustering information but no historical data.
\begin{figure*}
    \centering
    \begin{subfigure}[b]{0.23\textwidth}
        \includegraphics[width=\textwidth]{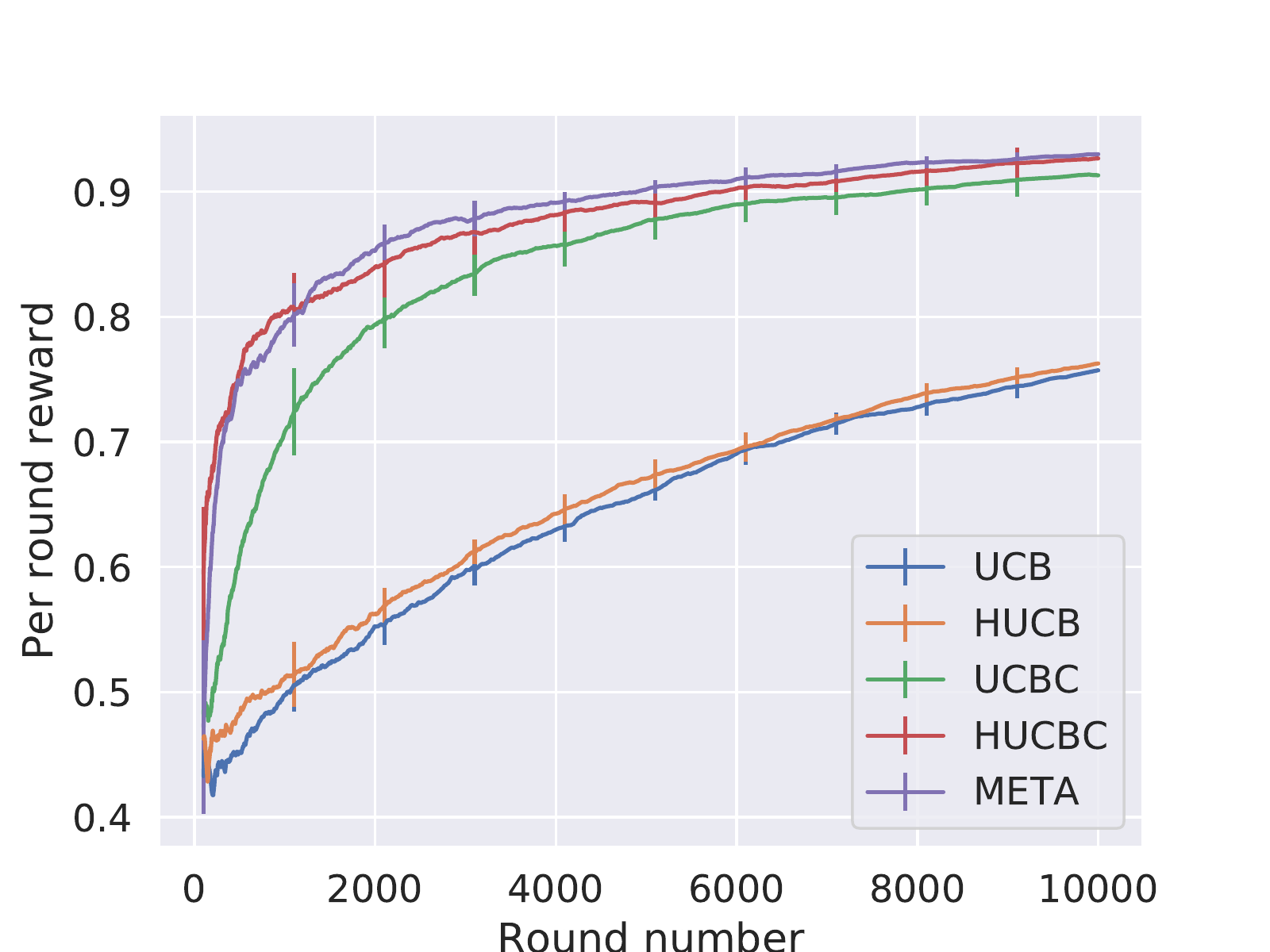}
        \caption{Synthetic classical bandit}
        \label{fig:classicalbandits}
    \end{subfigure}
    ~ 
    \begin{subfigure}[b]{0.23\textwidth}
        \includegraphics[width=\textwidth]{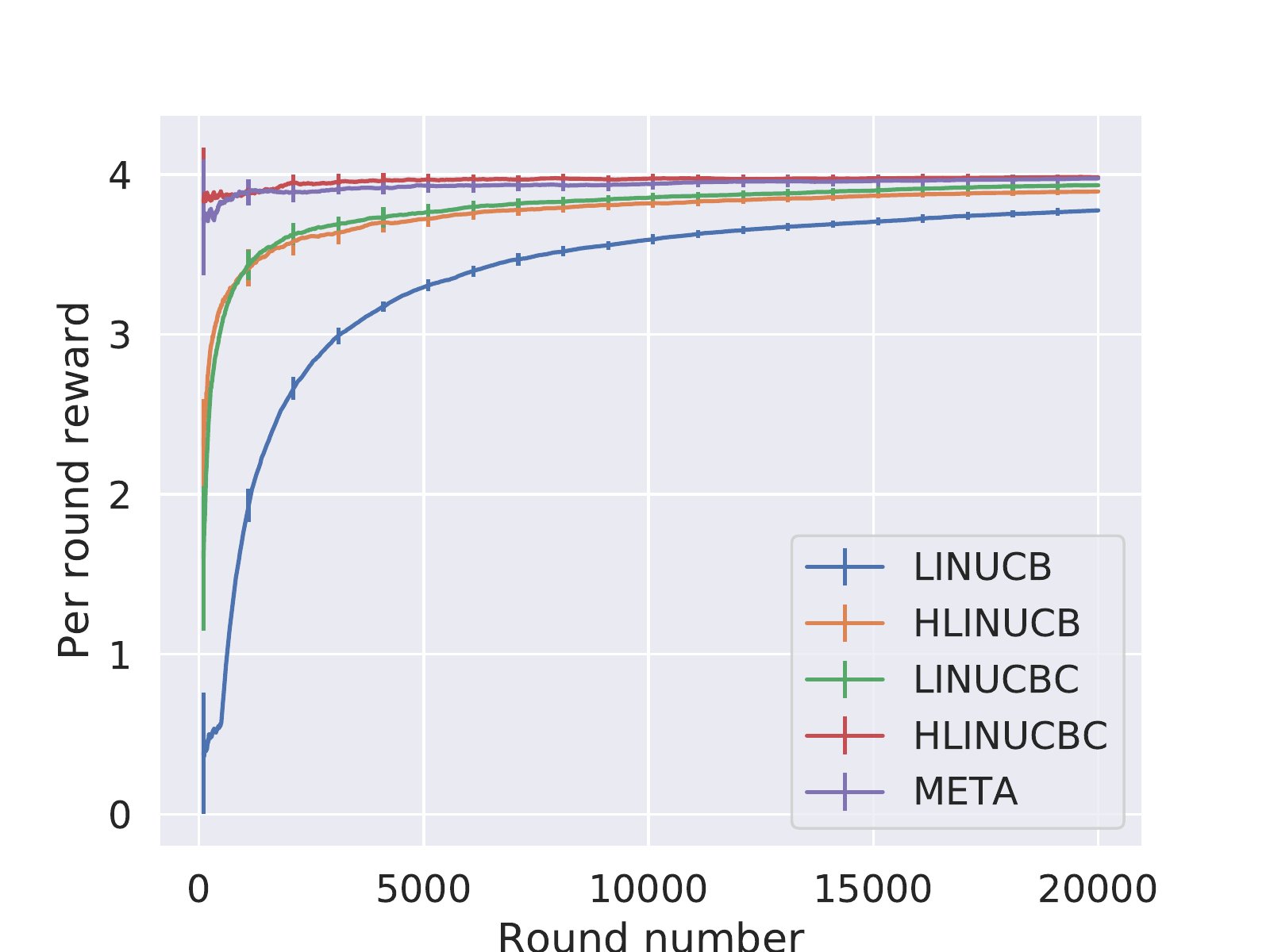}
        \caption{Synthetic contextual bandit}
        \label{fig:contextualbandits}
    \end{subfigure}
    ~ 
    \begin{subfigure}[b]{0.23\textwidth}
        \includegraphics[width=\textwidth]{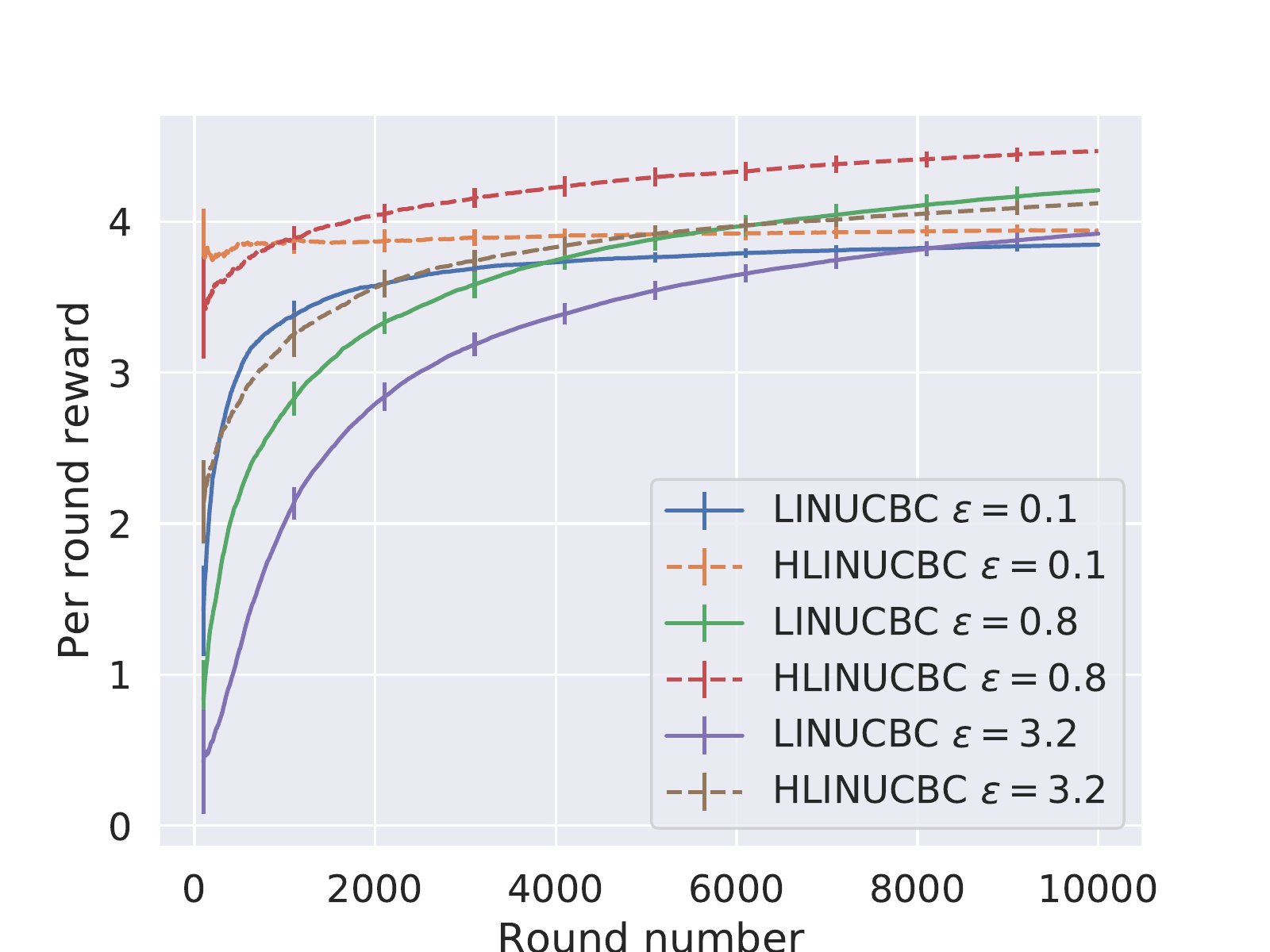}
        \caption{Synthetic CB with radii}
        \label{fig:contextualradii}
    \end{subfigure}
    
    \begin{subfigure}[b]{0.23\textwidth}
        \includegraphics[width=\textwidth]{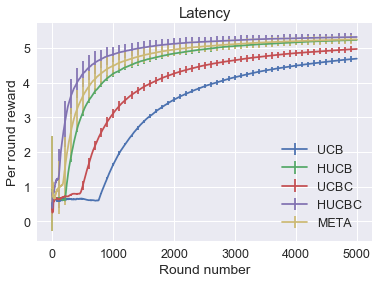}
        \caption{Web server selection with good clustering}
        \label{fig:latency1}
    \end{subfigure}
    ~ 
    \begin{subfigure}[b]{0.23\textwidth}
        \includegraphics[width=\textwidth]{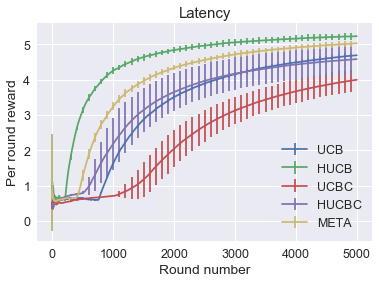}
        \caption{Web server selection with poor clustering}
        \label{fig:latency2}
    \end{subfigure}
    ~ 
    \begin{subfigure}[b]{0.23\textwidth}
        \includegraphics[width=\textwidth, clip]{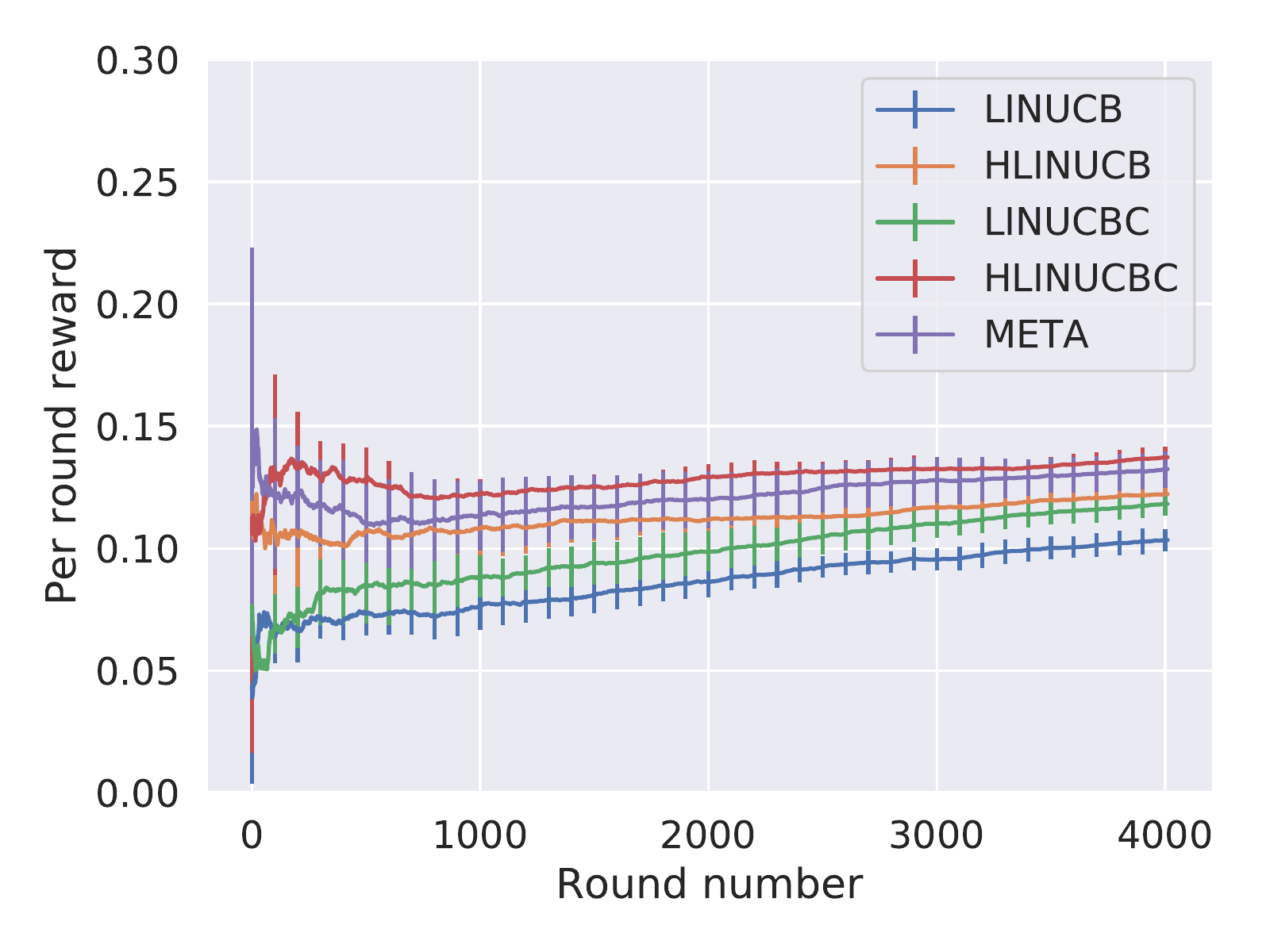}
        \caption{Warfarin dosage}
        \label{fig:classification}
    \end{subfigure}
    \caption{(a) HUCBC outperforms its competitors. META quickly learns to follow HUCBC 
    (b) HLINUCBC outperforms its competitors. META quickly learns to follow HLINUCBC
    (c) Rewards for the different cluster-based methods under different cluster radii $\epsilon$. 
    HLINUCBC benefits from historical data in all cases.
    (d) HUCBC provides significantly better rewards than competitors when clustering is good. Meta learns to follow HUCBC
    (e) HUCB outperforms HUCBC when clustering is poor and META learns to follow HUCB
    (f) HLINUCBC outperforms competitors and META learns to follow this}
    \label{fig:allfigures}
\end{figure*}

Our synthetically generated data assumes 10 clusters and 100 arms.
A random permutation assigns each arm $k$ to a cluster $c(k)$, where the centroid of cluster $i$ is defined as $\lambda(i) = \frac{1}{2}\left(u_i + \frac{1}{i}\right)$ and $u_i \sim \mathcal{U}(0, 1)$.
The reward obtained when arm $k$ is played is sampled from $\mathcal{U}(0, 2 \alpha_k \lambda(c(k)))$, where $\alpha_k$ is a arm dependent constant. Thus, the expected reward for cluster $i$ is $\lambda(i)$. The historical data is generated as follows: for 25\% we generate data playing them $X$ times where $X$ is sampled from a Poisson distribution with parameter 10.
We report the results of 20 trials in in Figure \ref{fig:classicalbandits}.
Each trial consisted of $10^4$ rounds and we plot the the number of rounds per-round-reward (cumulative reward until that round / number of rounds).
The error bars correspond to the interval of $\pm 1$ standard deviation around the mean. 
Clearly, the per-round-reward of HUCBC is the fastest to converge to that of the best arm. Interestingly, while historical 
data improves the performance of HUCB over UCB only to
a small degree (possibly due to asymmetry in history), combining this information with the clustering information improves
the performance of HUCBC over UCBC to a significantly larger degree. This is an effect of the fact that even though
historical data is sparse and asymmetric at the arm level, combining this information at the cluster level is more impactful. META converges rather quickly to the correct choice (HUCBC) in this setting.

\noindent\textbf{Experiments with Synthetic Contextual Bandit:} We will now compare our proposed contextual bandits HLINUCBC (Section \ref{sec:HLINUCBC}) and the META algorithm selection to LINUCB~\cite{13}, HLINUCB and LINUCBC.
Similarly to HUCB and UCBC, HLINUCB and LINUCBC are degenerated versions of HLINUCBC using either only the historical data or the clustering information.

The synthetic data was created by assuming 10 clusters and 100 arms.
Again, a random permutation assigns each arm $k$ to a cluster $c(k)$.
The centroid $\theta^c_i$ of cluster $i$ is sampled from $\mathcal{N}(\mathbf{0}, I_5)$.
The coefficient $\theta_k$ of arm $k$ is defined as $\theta_k = \theta^c_i + \epsilon \nu_k$, where $\nu_k$ is sampled independently at random from $\mathcal{N}(\mathbf{0}, I_5)$. The reward for playing arm $k$ in context $x$ is sampled from $\mathcal{U}(0, 2\theta_k^{\top} x)$.
Thus, our synthetically generated problem is linear in the input and the expected distance between clusters is $\sqrt{5}\epsilon$.
By varying $\epsilon$, we control the tightness of the clusters and can observe the impact of clustering information with varying quality of clusters.
Contexts in each round of a trial are drawn i.i.d from the multivariate normal distribution $\mathcal{N}(\mathbf{0}, I_5)$.
We created asymmetric historical data for arms by playing an arm $X$ times with $X$ distinct contexts,
where $X \sim Poisson(10)$.

We report the results for $\epsilon = 0.1$ in Figure \ref{fig:contextualbandits}. 
As in the case of classical bandits, we repeat the experiment 20 times, where each trial consists of 10,000 rounds.
The error bars correspond to the interval of $\pm 1$ standard deviation around the mean. 
HLINUCBC is clearly outperforming its competitors and META is able to identify HLINUCBC as the stronger method over HLINUCB.
In this set up, historical data and clustering are equally important such that HLINUCB and LINUCBC provide similar results.
LINUCB clearly provides the worst results.
We also compare the performance of HLINUCBC vs. LINUCBC under different values of $\epsilon \in \{0.1, 0.8, 3.2\}$.
We normalize the rewards in these three distinct settings to enable a meaningful comparison.
We present the results in Figure \ref{fig:contextualradii}.
Since HLINUCBC utilizes historical data in addition to the
clustering information, we can see that it improves upon the performance of LINUCBC for every setting of $\epsilon$.


\noindent\textbf{Web Server Selection:~\cite{CDN} }
Essentially one needs to decide from what source one should pull content available at multiple sources while minimizing the cumulative latency for successive retrievals.
Similar to \cite{Vermorel2005}, we assume that only a single resource can be picked at a time.
The university web page data set\footnote{https://sourceforge.net/projects/bandit/} features more than $700$ sources with about $1300$ response times each. 
In order to facilitate our clustering approach in this setting, we perform two types of clustering: 1) split the resources based on average latencies into five categories ranging from `fast' to `slow', 2) based on domain names into 17 clusters.
Historical data was generated by choosing 200 observations in total at random.
In Figures~\ref{fig:latency1} and \ref{fig:latency2} we show the mean cumulative reward for the UCB, UCB with history (HUCB), UCB with clustering (UCBC), UCB with both (HUCBC), and META over 10 repetitions.
Before each experiment, the data has been shuffled. We can observe a clear impact of employing clustering and history and the combination of both, and the type of clustering. The first type of clustering achieves an effective grouping of arms while the second one does not. Confirming our observations on the synthetic data, only clustering information provides better results than only historic data when clustering is of good quality. The two kinds of clustering also highlight the usefulness of our META approach that in both cases converges to the correct approach.

\noindent\textbf{Warfarin Drug Dosage: } 
The Warfarin problem data set is concerned with determining the correct initial dosage of the drug Warfarin for a given patient.
Correct dosage is variable due to patient’s clinical, demographic and genetic context. 
We select this benchmark because it enables us to compare the different algorithms since this problem allows us to create a hierarchical classification data set. Originally, the data set was converted to a classification data set with three classes using the nominal feedback of the patient.
We create a hierarchy as follows.
We divide each of the original three classes into five more granular classes.
Bandit methods which do not use hierarchy will simply tackle this problem as a classification task with 15 classes.
Others will first assign instances to one of the three classes and finally decide to choose one of the final five options.
The order of the patients is shuffled, 1500 of them are chosen as historical data.
We report the mean and standard deviation of 10 runs in Figure \ref{fig:classification}.
HLINUCBC is outperforming all competitor methods and also META is able to successfully detect that HLINUCBC is the dominating method.
For this problem HLINUCB provides better results than LINUCBC, indicating that historic data is worth more than clustering information.

\section{Conclusions}
We introduced a variety of algorithms for classical and contextual bandits which incorporate historical observations and pre-clustering information between arms in a principled manner.
We demonstrated their effectiveness and robustness both through rigorous regret analysis as well as extensive experiments on synthetic and real world datasets. Two interesting open problems emerge from this work: 1) Are there instance independent regret bounds for HUCBC which depend only on the total number of clusters and the number of arms in the optimal cluster? 2) What are the upper and lower bounds on the regret of HLINUCBC?

\bibliographystyle{aaai}
\bibliography{ijcai19}

\end{document}